% Template for ICME-2018 paper; to be used with:
%          spconf.sty  - ICASSP/ICIP LaTeX style file, and
%          IEEEbib.bst - IEEE bibliography style file.
% --------------------------------------------------------------------------
\documentclass[5pt]{article}

\usepackage[letterpaper]{geometry}
\usepackage{spconf,amsmath,epsfig}
\usepackage{enumitem}

% For left bottom footer only on first page
\usepackage{fancyhdr}

\usepackage{spconf,amsmath,epsfig}

\usepackage{subfigure}
\usepackage{float}

\usepackage{tabularx}
\usepackage{tabu}
\usepackage{booktabs}
\usepackage{amsthm}
\usepackage{amsmath}
\usepackage{amsfonts}
\usepackage{algorithm}
\usepackage{algorithmicx}
\usepackage{algpseudocode}
\usepackage{makecell,rotating,multirow,diagbox}

\newtheorem{proposition}{Proposition}

\thispagestyle{fancy}
\fancyhf{}

\lfoot{978-1-5386-1737-3/18/\$31.00 \copyright 2018 IEEE}

\pagestyle{empty}

\begin{document}\sloppy

% Example definitions.
% --------------------
\def\x{{\mathbf x}}
\def\L{{\cal L}}

% Title.
% ------
\title{TLR: TRANSFER LATENT REPRESENTATION FOR UNSUPERVISED \\ DOMAIN ADAPTATION}
%
% Single address.
% ---------------
%\name{Author(s) Name(s)\thanks{Thanks to XYZ agency for funding.} sdad}
%\address{Author Affiliation(s) \\
%abc@xyz.com}

%
% For example:
% ------------
%\address{School\\
%	Department\\
%	Address\\
%   Email}
%
% Two addresses (uncomment and modify for two-address case).
% ----------------------------------------------------------

%\twoauthors
%  {Pan Xiao, Bo Du, Lefei Zhang, Ruimin Hu}
%	{Wuhan University\\
%	School of Computer\\
%	Wuhan 430072, Hubei, China}
%  {C. Author-three, D. Author-four}
%	{School C-D\\
%	Department C-D\\
%	Address C-D\\}
%Thanks to National Natural Science Foundation of China (No. xxxx and xxxxx), and the Science and Technology Commission of Shanghai Municipality Program (No. xxxxxxx) for funding.
\name{Pan Xiao$^1$, Bo Du*$^1$, Jia Wu$^2$, Lefei Zhang$^1$, Ruimin Hu$^1$ and Xuelong Li$^{3}$\thanks{*Corresponding author (email: remoteking@whu.edu.cn). Thanks to the exchange program of graduate students from Wuhan University, the National Natural Science Foundation of China (No. 61471274 and 61771349), the MQNS (No. 9201701203), the MQ Enterprise Partnership Scheme Poilt Res (No. 9201701455), and the major fund project (No. U1736206) for funding.}}
\address{
${}^1$School of Computer, Wuhan University, Wuhan 430072, Hubei, China\\
${}^2$Department of Computing, Macquarie University, Sydney, NSW 2109, Australia\\
${}^3$Xi'an Institute of Optics and Precision Mechanics,\\
Chinese Academy of Sciences, Xi'an 710119, Shaanxi, China\\
}

%\author{\IEEEauthorblockN{Pan Xiao\IEEEauthorrefmark{1}, Bo Du\IEEEauthorrefmark{1}, Shuang Yun\IEEEauthorrefmark{1},
%Xue Li\IEEEauthorrefmark{2},
%YiPeng Zhang \IEEEauthorrefmark{3} and
%Jia Wu\IEEEauthorrefmark{4}}
%\IEEEauthorblockA{\IEEEauthorrefmark{1}School of Computer, Wuhan University,
%Wuhan 430072, Hubei, China\\
%%Email: \{panxiao,remoteking,yunshuang2014\}@whu.edu.cn
%}
%\IEEEauthorblockA{\IEEEauthorrefmark{2}State Key Laboratory of Information
%Engineering in Surveying, Mapping, and Remote Sensing,\\ Wuhan University,
%Wuhan 430079, China
%\\
%%Email: homer@thesimpsons.com
%}
%\IEEEauthorblockA{\IEEEauthorrefmark{3}Department of Electrical Engineering \& Computer Science,
%Syracuse University, Syracuse, New York, USA}
%\IEEEauthorblockA{\IEEEauthorrefmark{4}Department of Computing,
%Macquarie University,
%Sydney, NSW 2109, Australia\\
%%Email: jia.wu@mq.edu.au
%}
%\\
%\{panxiao,remoteking,yunshuang2014,lixue93\}@whu.edu.cn, yzhan139@syr.edu, jia.wu@mq.edu.au
%}

\maketitle

\hyphenation{web-cam}\hyphenation{latent}\hyphenation{methods}\hyphenation{data-sets}\hyphenation{original}\hyphenation{sub-optimal}\hyphenation{classifier}
\hyphenation{diagonal}\hyphenation{cross-domain}\hyphenation{aims}\hyphenation{auto-encoder}\hyphenation{developed}\hyphenation{causes}\hyphenation{space}
\hyphenation{do-mains}\hyphenation{unsatisfactory}\hyphenation{straight-forward}\hyphenation{also}\hyphenation{represen-tations}\hyphenation{prediction}
\hyphenation{labeled}\hyphenation{impossible}\hyphenation{analyze}\hyphenation{maximization}\hyphenation{pro-perties}
\begin{abstract}
Domain adaptation refers to the process of learning prediction models in a target domain by making use of data from a source domain.
%where the distributions of both domains are different but related.
Many classic methods solve the domain adaptation problem by establishing a common latent space, which may cause the loss of many important properties across both domains. %that are helpful for model building to be lost.
In this manuscript, we develop a novel method, \emph{transfer latent
representation} (TLR), to learn a better latent space.
%Specifically, TLR aims to construct robust latent representations of both domains by integrate the simple linear autoencoder and Maximum Mean Discrepancy (MMD) into an objective function.
Specifically, we design an objective function based on a simple linear autoencoder to derive the latent representations of both domains.
%In our method, the common properties of both domains are well preserved and the noise that causes domain shift is largely reduced.
%Then Maximum Mean Discrepancy (MMD) is also integrated into the objective function, which can further narrow the distance between both domains.
The encoder in the autoencoder aims to project the data of both domains into a robust latent space.
%as in the existing domain adaptation methods.
%However, the decoder exerts an additional constraint, that is, the original data must be reconstructed by the projection.
Besides, the decoder imposes an additional constraint to reconstruct the original data, which can preserve the common properties of both domains and reduce the noise that causes domain shift.
%This additional reconstruction constraint assists in preserving the common properties of both domains and reducing the noise that causes domain shift.
Experiments on cross-domain tasks demonstrate the advantages of TLR over competing methods.
%compared with the state-of-the-art domain adaptation methods.
\end{abstract}

\begin{keywords}
Domain adaptation, linear autoencoder, object and action recognition
\end{keywords}

\section{Introduction}
\hyphenation{model}\hyphenation{nearly}\hyphenation{enough}\hyphenation{unlabeled}
%One difficulty of classification on different kinds of multimedia data usually lies in the fact that labels for these datasets are in short supply, which makes training an efficient model nearly impossible. However, there are usually many labeled datasets in related domains. Can we use these labeled datasets (in the source domain) to handle unlabeled datasets (in the target domain)?
%In an attempt to answer this question, a technique named domain adaptation (DA) has been developed.

%The exponential growth of online images and videos has created a compelling demand for automatic technologies for
%organizing and analyzing the multimedia content. Unfortunately, labeled images are usually very sparse in new visual
%domains. Moreover, it is very complex, if not impossible, to learn a visual category model without rich labeled images.
%In such real-world applications, it is indispensable in image classification to leverage abundant labeled images readily
%available in some old domains. Recently, the literature has witnessed increasing interests in developing domain adaptation (DA) algorithms for cross-domain knowledge transfer problems.
%Domain adaptation has proven to be promising in image classification and tagging, action recognition, and feature learning.

Recently, online images and videos grow exponentially, which has created a strong demand for technologies to analyze the multimedia content.
Unfortunately, labels for these new visual images are in short supply and it is nearly impossible to learn a good visual category model without enough labels. In real-world applications, there exist many labeled datasets in some old domains. Can we use these labeled datasets (i.e. the source domain) to handle unlabeled datasets (the target domain)?
To answer this question, a technique named domain adaptation (DA) has been developed.
%In an attempt to answer this question, a technique named domain adaptation (DA) has been developed.
%In such real-world applications, it is indispensable to leverage abundant labeled images readily available in some old domains. Recently, the literature has witnessed increasing interests in developing domain adaptation (DA) algorithms for cross-domain knowledge transfer problems.

DA is very important when the labels for target domain data are lacking \cite{Yuan2012Information}.
For example, we can obtain some labeled images drawn from the Internet (i.e. the source domain) and some unlabeled images captured by cameras (the target domain), and that both domains contain the same objects. It is believed that a model trained in the source domain can significantly improve classification accuracy in the target domain after the common properties of both domains are extracted \cite{Pan2010A}.
Taking action recognition by surveillance cameras in Fig.1 as an example, we have a series of action \hyphenation{shots}shots captured from two different angles.
%If we were to directly use a set of images on the left from one camera (i.e. the source domain) to classify pictures on the right from another camera (the target domain), the classification accuracy might be unsatisfactory.
If we were to directly use a set of labeled images on the left to classify unlabeled pictures on the right, the classification accuracy might be unsatisfactory.
However, considering that both \hyphenation{sets}sets of pictures contain the same set of actions,
we believe a better prediction result can be obtained by recognizing and utilizing the commonalities between the image sets in classification.

%\begin{figure}
%\begin{minipage}{0.28\linewidth}
%  \centerline{\includegraphics[width=2.8cm]{11111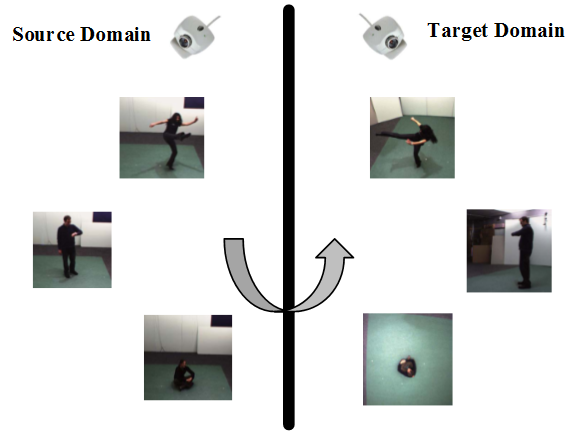}}
%  \centerline{(a) Original space}
%\end{minipage}
%\hfill
%\begin{minipage}{.28\linewidth}
%  \centerline{\includegraphics[width=2.8cm]{2.png}}
%  \centerline{(b) RKHS}
%\end{minipage}
%\hfill
%\begin{minipage}{.28\linewidth}
%  \centerline{\includegraphics[width=2.8cm]{33.png}}
%  \centerline{(c) Latent space}
%\end{minipage}
%\caption{Motivating examples for the proposed TLR method. The red and blue points represent the source domain and target domain respectively. There are two different categories for each domain, which are linearly inseparable.}
%\label{fig:res}
%\end{figure}
\begin{figure}
\begin{minipage}{0.98\linewidth}
  \centerline{\includegraphics[width=6cm]{}}
\end{minipage}
\caption{Cross-camera action recognition.}
\label{fig:res}
\end{figure}

Domain adaptation methods can be divided into two categories: semi-supervised DA and unsupervised DA according to the availability of labeled instances in the target domain.  In this work, we focus on the unsupervised scenario, which is hard
to solve since the labels for the target domain are totally non-existent.
Many well-known methods have been proposed to solve the unsupervised domain adaptation problem. One straightforward solution is to project both domains into a common latent space.
For example, Fernando \emph{et al.} \cite{Fernando2013Unsupervised} proposed to learn a linear projection aligning the source and target domains.
Gong \emph{et al.} \cite{Gong2012Geodesic} claimed that new latent representations could be obtained by regarding the subspaces of both domains as points in Grassmann manifolds.
%While these methods can improve the classification accuracy significantly, these solutions are still not ideal, as they do not take the inseparability of the source and target domains in the original feature space into account, as shown in Fig. 1(a).
%A slightly more intelligent design for domain adaptation is to first map both domains into a higher space, as shown in Fig. 1(b). It is believed that the classification accuracy can be further improved after both domains are mapped into a higher-dimensional feature space.
Pan \emph{et al.} \cite{Pan2011Domain} and Yan \emph{et al.} \cite{Yan2016Domain_Y} projected the source and target domain data into a  Reproducing Kernel Hilbert Space (RKHS) to obtain the latent representations of both domains. Although all these state-of-the-art methods have achieved promising results, there is still room for improvement, mainly because those methods may result in the loss of many important properties of both domains that are helpful for model building when projection is performed.
%However, this solution is still suboptimal, mainly because it may result in the loss of some important properties of both domains that are helpful for model building when projection is performed.

In this paper, we propose a new method called Transfer Latent Representation (TLR) to learn a better latent space.
Specifically, we first follow the procedure outlined in \cite{Pan2011Domain} to obtain linearly separable source and target domain data by projecting both domains into an RKHS.
To avoid the loss of useful properties, we then design an objective function based on a linear autoencoder to derive the latent representations of both domains.
The encoder of the autoencoder is set up to project both domains into a latent space, in the same way as the existing domain adaptation methods.
Besides, the decoder exerts an additional constraint, that is, the original data must be reconstructed by the projection.
It is supposed that the use of this additional reconstruction constraint can assist in preserving the common properties of both domains and reducing the noise that causes domain shift.
The Maximum Mean Discrepancy (MMD) \cite{gretton2007kernel} between the latent representations is also integrated into the objective function, so that the function is able to further narrow the distance between different domain distributions.
Finally, we obtain the latent representations of both domains in a latent space.

%The remainder of the paper is organized as follows. The preliminaries and the details of the proposed TLR, including the theory formulation and the optimization method, are provided in the next two sections. We then discuss the experimental results, followed by the conclusive discussions.
\section{Preliminaries}\hyphenation{domain}
In this work, we aim to solve the unsupervised domain adaptation problem:
how to best label the unlabeled target domain data in an unsupervised manner by training a model on labeled data in a relevant source domain.
Firstly, we denote $X_{S}= \left \{ x_{S_{1}},...,x_{S_{n_{1}}} \right \}\in\mathbb{R}^{d\times n_{1}}$ as the source domain data and $X_{T}= \left \{ x_{T_{1}},...,x_{T_{n_{2}}} \right \}\in\mathbb{R}^{d\times n_{2}}$ as the target domain data.
Here, $d$ is the dimension of each instance, while $n_{1}$ and $n_{2}$ are the number of samples in the source and target domains respectively.
The source domain data labels are denoted as $Y_{S}= \left \{ y_{S_{1}},...,y_{S_{n_{1}}} \right \}\in\mathbb{R}^{n_{1}}$, where $y_{S_{i}}$ is the label of the corresponding source domain sample $x_{S_{i}}$.
Similarly, the predicted labels of the target domain are denoted as $\hat{Y}_{T}= \left \{ y_{T_{1}},...,y_{T_{n_{2}}} \right \}\in\mathbb{R}^{n_{2}}$.
Our goal is to train a classifier based on $X_{S}$ and $Y_{S}$, then predict the labels of the target domain data as accurately as possible.
%Generally, the assumption is that the marginal distributions of both domains are different: namely, that $\mathcal{P}(X_{S})\neq \mathcal{Q}(X_{T})$, but $\mathcal{P}(Y_{S}|X_{S})= \mathcal{Q}(Y_{T}|X_{T})$.

\subsection{Maximum Mean Discrepancy}
\emph{Maximum Mean Discrepancy} (MMD) has been successfully used to solve the domain adaptation problem \cite{Pan2008Transfer,Long2013transfer}.
By computing on $X_{S}$ and $X_{T}$, a non-parametric distance estimate between domain distributions can be directly obtained. Here, let
\begin{equation}
\label{eqn_example}
MMD(X_{S},X_{T})=\left \| \frac{1}{n_{1} }\sum_{i=1}^{n_{1}}f(x_{S_{i}}) - \frac{1}{n_{2} }\sum_{i=1}^{n_{2}}f (x_{T_{i}}) \right \|_{\mathcal{H}}^{2}
\end{equation}
where $\mathcal{H}$ is a universal Reproducing Kernel Hilbert Space (RKHS), and $f : \mathcal{X} \rightarrow  \mathcal{H}$ denotes the non-linear transformation. By means of the kernel trick, (i.e., $k(x_{i} ,x_{j})=f(x_{i})f(x_{j})'$), we can rewrite (1) as

\begin{equation}
\label{eqn_example}
MMD(X_{S},X_{T})=tr(KL).
\end{equation}
in which
\begin{equation}
\label{eqn_example}
K
= \begin{bmatrix}
K_{S,S} & K_{S,T}\\
K_{T,S} & K_{T,T}
\end{bmatrix}
=\begin{bmatrix}
H_{S}\\
H_{T}
\end{bmatrix}\in \mathbb{R}^{(n_{1} + n_{2}) \times (n_{1} + n_{2})}
\end{equation}
is a symmetric kernel matrix. The elements of $K_{S,S}$, $K_{T,T}$, $K_{T,S}$ and $K_{S,T}$ are the values of $k(x_{i}, x_{j})$ when ($x_{i}$,$x_{j}$) belongs to the source domain, target domain, and two cross domains respectively.
$H_{S}$ and $H_{T}$ denote the source and target domain samples mapped into the RKHS.
$L$ is the MMD matrix and can be described as follows:
\begin{eqnarray}
\label{eqn_example}
(L)_{i,j}=\left\{\begin{matrix}
\frac{1}{n_{1}n_{1}}, & x_{i},x_{j}\in X_{S} \\
\frac{1}{n_{2}n_{2}}, & x_{i},x_{j}\in X_{T} \\
\frac{-1}{n_{1}n_{2}}, &otherwise
\end{matrix}\right.
\end{eqnarray}

\section{Transfer Latent Representation}
The proposed model consists of two main stages. First,
we map the data of both domains into the RKHS and obtain $H_{S}$ and $H_{T}$ according to (3).
We then derive a matrix $W$ based on the simple linear autoencoder, which projects $H_{S}$ and $H_{T}$ into a latent space.
%In this space, the common properties of both domains are preserved and the noise that often causes domain shift is reduced.

\subsection{Simple Linear Autoencoder}
The simplest form of an autoencoder is linear. Here, there is no activation function in the single hidden layer.
The encoder is used to project the input data into the single hidden layer, while the decoder projects it back to the original feature space.
Suppose that $X \in \mathbb{R}^{n\times (n_{1} + n_{2})}$ is the input data matrix with $n$ samples.
We want to obtain a projection matrix $W \in \mathbb{R}^{(n_{1} + n_{2}) \times k}$ in order to explore the $k$-dimensional latent representation $P \in \mathbb{R}^{n \times k}$.
The obtained $P$ is projected back to the original feature space by means of transpose matrix of $W$ so that it becomes $\hat{X} \in \mathbb{R}^{n\times (n_{1} + n_{2})}$.
Note here that $k$ is smaller than $(n_{1} + n_{2})$.
By minimizing the reconstruction error,
we have
\begin{equation}
\label{eqn_example}
\min_{W,W^\top} \left \| PW^\top-X \right \|_{F}^{2} \quad
s.t. \quad P = XW.
\end{equation}
The above objective makes $X$ and $\hat{X}$ as similar as possible.
%, namely $W^\top \in \mathbb{R}^{k \times (n_{1} + n_{2})}$
%Following the procedure outlined in \cite{Ranzato2007Sparse}, we simplify (5) by setting $W^{*}=W^\top $, such that we now have
%\begin{equation}
%\label{eqn_example}
%\min_{W} \left \| PW^\top -X \right \|_{F}^{2} \quad s.t. \quad P = XW.
%\end{equation}
%This objective forces the linear autoencoder to form a latent representation at the hidden layer that has a smaller number of features than the input, and that preserves only those components that are useful for reconstructing input features.
%As a result, the linear autoencoder will tend to learn a latent representation that rejects noise and maintains the important properties from the input feature space.

\subsection{Model Formulation}
We apply the simple linear autoencoder to the source domain in RKHS ($H_{S}$), and the target domain in RKHS ($H_{T}$).
One significant advantage of the autoencoder is that it can reconstruct the input features of the source and target domains, which forces the latent representations of both domains to maintain as many important properties as possible.
%It is believed that, in the latent space, the noise that causes domain shift will be reduced and the common properties of different domains will be extracted.
Formally, we have
\begin{align}
\label{eqn_example}
\min_{W} \left \| P_{S}W^\top -H_{S} \right \|_{F}^{2}
\quad s.t. \quad P_{S} = H_{S}W.
\end{align}
\begin{align}
\label{eqn_example}
\min_{W} \left \| P_{T}W^\top -H_{T} \right \|_{F}^{2}
\quad s.t. \quad P_{T} = H_{T}W.
\end{align}
where $P_{S}$ and $P_{T}$ are the latent representations for the source and target domains respectively. %$W$ denotes the projection matrix.

To further narrow the distance between the distributions of both domains, we minimize the MMD of the two latent representations ($P_{S}$ and $P_{T}$).
More specifically, we have,
\begin{align}
\label{eqn_example}
\min_{W} MMD(P_{S},P_{T}).
\end{align}

By combining (6) and (7) with (8), the proposed model can be summarized as follows:
%\begin{eqnarray}
\begin{align}
%\begin{split}
\min_{W}F(W)&= MMD(P_{S},P_{T})+\alpha\left \| P_{S}W^\top -H_{S} \right \|_{F}^{2}\nonumber\\
        &+\beta\left \| P_{T}W^\top -H_{T} \right \|_{F}^{2}.
%\end{split}
%\end{eqnarray}
\end{align}
where $\alpha$ and $\beta$ are trade-off parameters.

At this point, our goal is to obtain the optimal $W$ that will minimize the objective function (9). The derived $W$ can project $H_{S}$ and $H_{T}$ into a common latent space.
It is believed that, in the latent space, the noise that causes domain shift will be reduced and the common properties of different domains will be extracted.
\subsection{Optimization}
In this section, we introduce three propositions in turn. An efficient optimization algorithm is then designed.
\begin{proposition}
The term (8) can be rewritten as
\begin{align}
\label{eqn_example}
\min_{W} tr(W^\top KLKW).
\end{align}
\end{proposition}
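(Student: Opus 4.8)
The plan is to recognize that, once the two latent representations are stacked, they sit inside the encoder in exactly the way the kernel matrix $K$ sits inside equation (2), so the passage from the norm in (1) to a trace is identical, and (10) then follows by substituting the encoder relation and using symmetry of $K$.

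First I would stack the latent representations. Set $P=\begin{bmatrix}P_{S}\\P_{T}\end{bmatrix}$. The encoder constraints $P_{S}=H_{S}W$ and $P_{T}=H_{T}W$ together with the block form of $K$ in (3) give $P=KW\in\mathbb{R}^{(n_{1}+n_{2})\times k}$, where the first $n_{1}$ rows of $P$ are the latent source samples and the last $n_{2}$ rows are the latent target samples, i.e. the row partition of $P$ is precisely the source/target partition used to define $L$ in (4). Next I would apply definition (1) to $P_{S}$ and $P_{T}$: letting $e\in\mathbb{R}^{n_{1}+n_{2}}$ be the column vector with entries $1/n_{1}$ on source indices and $-1/n_{2}$ on target indices, the vector appearing inside the norm in $MMD(P_{S},P_{T})$ is exactly $e^{\top}P$, so $MMD(P_{S},P_{T})=\|e^{\top}P\|^{2}=\operatorname{tr}(P^{\top}ee^{\top}P)$. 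Since $(ee^{\top})_{i,j}=e_{i}e_{j}$ reproduces the three cases of (4), we have $ee^{\top}=L$, hence $MMD(P_{S},P_{T})=\operatorname{tr}(P^{\top}LP)$ — the direct analogue of (2) for the latent space.

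Finally I would substitute $P=KW$ and use the symmetry $K^{\top}=K$ from (3):
\[
MMD(P_{S},P_{T})=\operatorname{tr}\!\big((KW)^{\top}L(KW)\big)=\operatorname{tr}(W^{\top}K^{\top}LKW)=\operatorname{tr}(W^{\top}KLKW),
\]
which is (10), and minimizing over $W$ then identifies (8) with (10).

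The only real obstacle is bookkeeping rather than mathematics: one must check carefully that the block structure of $P=KW$ into its first $n_{1}$ and last $n_{2}$ rows is consistent with the index convention behind the MMD matrix $L$ in (4), so that the \emph{same} $L$ reappears for the latent samples; once that is in place, the argument is just the reuse of the already-established identity $MMD(\cdot,\cdot)=\operatorname{tr}(\cdot\,L\,\cdot)$ plus $K^{\top}=K$.
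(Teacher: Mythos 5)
Your proof is correct and follows essentially the same route as the paper: the paper applies the kernel identity (2) to the latent Gram matrix $K_{\mathcal{H}}=KWW^\top K^\top$ and then cycles the trace, while you work in the primal form with the factorization $L=ee^\top$ to get $MMD(P_S,P_T)=\operatorname{tr}(P^\top LP)$ before substituting $P=KW$ and using $K^\top=K$ — the two computations are trace-cyclic rearrangements of one another. Your explicit verification that $ee^\top$ reproduces the three cases of (4) is a nice self-contained justification of a step the paper simply cites.
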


\begin{proof}
According to (2), we have
\begin{align}
\label{eqn_example}
MMD(P_{S},P_{T})
=tr(K_{\mathcal{H}}L).
\end{align}
where
\begin{align}
\label{eqn_example}
K_{\mathcal{H}}
&=\begin{bmatrix}
H_{S}WW^\top H_{S}^\top & H_{S}WW^\top H_{T}^\top\\
H_{T}WW^\top H_{S}^\top & H_{T}WW^\top H_{T}^\top
\end{bmatrix}\nonumber\\
&=\begin{bmatrix}
H_{S}\\
H_{T}
\end{bmatrix}WW^\top
\begin{bmatrix}
H_{S}^\top & H_{T}^\top
\end{bmatrix}\nonumber\\
&=KWW^\top K^\top.
\end{align}
It is worth noting that we use $k(x_{i} ,x_{j})=x_{i}x_{j}'$ directly, since the source and target domain data have been mapped into the RKHS.

By substituting equation (12) into (11),  the MMD of both domains in the latent space can be described as
\begin{align}
MMD(P_{S},P_{T})
&=tr(KWW^\top K^\top L)\nonumber\\
&=tr(W^\top K^\top LKW).
\end{align}
Since $K$ is a symmetric matrix, we have
\begin{align}
MMD(P_{S},P_{T})
=tr(W^\top KLKW).
%\end{split}
\end{align}
Finally, we obtain an equivalent problem (10).
\end{proof}
By combining (14) with (9), we can summarize our
objective function as
\begin{align}
\min_{W}F(W)& = tr(W^\top KLKW)+\alpha\left \| H_{S}WW^\top -H_{S} \right \|_{F}^{2}\nonumber\\
        &+\beta\left \| H_{T}WW^\top -H_{T} \right \|_{F}^{2}.
\end{align}

\begin{proposition}
The objective function (15) can be rewritten more compactly as
\begin{align}
\label{eqn_example}
\min_{W} tr(WW^\top AWW^\top  + W^\top BW-2W^\top AW+A).
\end{align}
where
\begin{align}
\label{eqn_example}
A = KMK,B=KLK.
\end{align}
and
\begin{align}
\label{eqn_example}
M=\begin{bmatrix}
\alpha I_{n_{1}\times n_{1}} & 0_{n_{1}\times n_{2}}\\
0_{n_{2}\times n_{1}} & \beta I_{n_{2}\times n_{2}}
\end{bmatrix}.
\end{align}
\end{proposition}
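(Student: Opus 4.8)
The plan is to start from the objective function (15) and rewrite each of its three terms so that everything is expressed in terms of the matrices $A$, $B$, and $W$. The first term $tr(W^\top KLKW)$ is already in the desired form: by the definition $B = KLK$ in (17), it is simply $tr(W^\top BW)$, which accounts for the second summand inside the trace in (16). So the real work is with the two reconstruction terms.

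Next I would expand the Frobenius-norm terms using the identity $\|Z\|_F^2 = tr(ZZ^\top)$. Writing $Z_S = H_S W W^\top - H_S = H_S(WW^\top - I)$, we get
\begin{align}
\left\| H_S W W^\top - H_S \right\|_F^2 &= tr\big(H_S(WW^\top - I)(WW^\top - I)^\top H_S^\top\big)\nonumber\\
&= tr\big((WW^\top - I)^\top H_S^\top H_S (WW^\top - I)\big),\nonumber
\end{align}
using the cyclic property of the trace and the symmetry of $WW^\top - I$. The analogous computation applies to the target term with $H_T^\top H_T$. Now I would expand the product $(WW^\top - I)(\cdot)(WW^\top - I)$, which yields four pieces: $WW^\top(\cdot)WW^\top$, $-WW^\top(\cdot)$, $-(\cdot)WW^\top$, and $(\cdot)$. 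The two cross terms have equal trace by cyclicity and symmetry, so they combine into a factor of $-2$.

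The key step is to absorb the trade-off parameters $\alpha$ and $\beta$ together with the Gram matrices $H_S^\top H_S$ and $H_T^\top H_T$ into the single matrix $A$. Since $K = [H_S^\top; H_T^\top]^\top$ in the notation of (3) — that is, $K = [H_S^\top, H_T^\top]$ read as the stacking giving $K^\top K$-type products — and $M$ is block-diagonal with blocks $\alpha I$ and $\beta I$, one checks that
$$
\alpha\, H_S^\top H_S + \beta\, H_T^\top H_T = K M K = A.
$$
Here I would be careful about the exact shape conventions: $H_S$ lives in $\mathbb{R}^{n_1 \times (n_1+n_2)}$ and $H_T$ in $\mathbb{R}^{n_2 \times (n_1+n_2)}$, so $K$ as defined stacks them vertically, and $K^\top M K$ with $M \in \mathbb{R}^{(n_1+n_2)\times(n_1+n_2)}$ gives an $(n_1+n_2)\times(n_1+n_2)$ matrix equal to $\alpha H_S^\top H_S + \beta H_T^\top H_T$; since $K$ is symmetric this is $KMK = A$. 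Combining the expanded pieces then gives $tr(WW^\top A WW^\top) - 2\,tr(W^\top A W) + tr(A)$ from the two reconstruction terms, which when added to $tr(W^\top B W)$ from the MMD term yields exactly (16).

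The main obstacle will be bookkeeping with the matrix dimensions and the transpose/symmetry conventions — in particular verifying that the cross terms genuinely coincide under the trace, that $WW^\top - I$ is symmetric so the two ``middle'' expansions match, and that the block structure of $M$ reproduces $\alpha H_S^\top H_S + \beta H_T^\top H_T$ exactly (rather than, say, with the blocks transposed or the roles of $n_1, n_2$ swapped). Once those conventions are pinned down, the remaining manipulations are routine applications of trace linearity and cyclicity, and the constant term $tr(A)$ simply carries through since it does not depend on $W$.
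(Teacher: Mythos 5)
Your proposal is correct and follows essentially the same route as the paper's Appendix proof: expand the two Frobenius-norm reconstruction terms via the trace, collect the quadratic, cross, and constant pieces, and identify $\alpha H_{S}^\top H_{S}+\beta H_{T}^\top H_{T}=KMK=A$ using the block structure of $K$ and $M$ (with $KMK=K^\top MK$ by symmetry of $K$), together with $B=KLK$ for the MMD term. The only cosmetic difference is that you factor the residual as $H_S(WW^\top - I)$ before expanding, while the paper expands the difference directly; the resulting algebra is identical.
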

The proof is given in the \textbf{Appendix}.

Here, it is evident that the solution may collapse to one point ($W = 0$).
To avoid such an occurrence, we impose a constraint $W^\top AW=I$ into (16).
Accordingly, the optimization problem with constraint can be summarized as follows:
\begin{eqnarray}
\label{eqn_example}
&\min_{W} &tr(W^\top W + W^\top BW)\nonumber\\
&s.t. &W^\top AW=I.
\end{eqnarray}
We can solve problem (19) efficiently using the Lagrangian multiplier method, so that it eventually becomes the following optimization problem:
\begin{proposition}
Problem (19) can be rewritten as
\begin{align}
\label{eqn_example}
\max_{W}tr((W^\top (I+B)W)^{-1}W^\top AW).
\end{align}
\end{proposition}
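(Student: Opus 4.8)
The plan is to treat the constrained problem (19) by the method of Lagrange multipliers, extract its first-order stationarity condition, and recognize that this condition is a generalized eigenvalue equation shared by the ratio-trace functional in (20); since the two problems then reduce to selecting the same set of generalized eigenvectors, they are equivalent.

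First I would attach a symmetric multiplier matrix $\Lambda\in\mathbb{R}^{k\times k}$ to the matrix constraint $W^\top A W=I$ and form
\[
\mathcal{L}(W,\Lambda)=tr(W^\top W+W^\top B W)-tr\big(\Lambda(W^\top A W-I)\big).
\]
Setting $\partial\mathcal{L}/\partial W=0$ gives $(I+B)W=AW\Lambda$, so the columns of an optimal $W$ are generalized eigenvectors of the pencil $(I+B,\,A)$. Left-multiplying this identity by $W^\top$ and using $W^\top A W=I$ identifies $\Lambda=W^\top(I+B)W$, so the value of the objective of (19) at a stationary point is $tr\big(W^\top(I+B)W\big)=tr(\Lambda)=\sum_i\lambda_i$, the sum of the $k$ corresponding generalized eigenvalues. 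Since $I+B=I+KLK$ is positive definite ($L$ being positive semidefinite), the $\lambda_i$ are positive, and the minimum is attained by choosing the $k$ smallest of them; after an orthogonal change of basis of $W$ (which preserves $W^\top A W=I$ and $tr(\Lambda)$) one may take $\Lambda$ diagonal.

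Next I would run the same computation on (20). The functional $tr\big((W^\top(I+B)W)^{-1}W^\top A W\big)$ is invariant under $W\mapsto WR$ for any invertible $R$, so one may again restrict attention to $W$ with $W^\top A W=I$; its stationarity condition is once more $(I+B)W=AW\Lambda$, and at such a point the functional equals $tr\big((W^\top(I+B)W)^{-1}\big)=tr(\Lambda^{-1})=\sum_i 1/\lambda_i$. Because $x\mapsto 1/x$ is decreasing on $(0,\infty)$, maximizing $\sum_i 1/\lambda_i$ over admissible sets of $k$ eigenvalues again selects the $k$ smallest. Hence both (19) and (20) are solved by the matrix of generalized eigenvectors of $(I+B,\,A)$ associated with its $k$ smallest eigenvalues (up to the unavoidable rotational ambiguity), which is the claimed equivalence.

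The step I expect to be the real obstacle is making this reduction rigorous rather than the differentiation itself: one must justify passing from the matrix-constrained trace in (19) to the unconstrained ratio-trace in (20) through the scaling invariance, verify positive definiteness of $I+B$ and handle the possible singularity of $A=KMK$ (since $K$ is only a Gram matrix and need not be invertible), for instance by working on the range of $K$ or by adding a small ridge term, so that $\Lambda$ and $\Lambda^{-1}$ are well defined, and confirm that the $k$ smallest eigenvalues really give the common optimizer via the monotonicity remark above. The remaining manipulations are routine.
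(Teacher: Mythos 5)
Your proposal follows essentially the same route as the paper: the same Lagrangian with a multiplier matrix on the constraint $W^\top A W=I$, the same stationarity condition $(I+B)W=AW\Lambda$, and the same passage to the ratio-trace form. The only difference is that you explicitly justify the flip from minimizing $tr((W^\top AW)^{-1}W^\top(I+B)W)$ to maximizing $tr((W^\top(I+B)W)^{-1}W^\top AW)$ via the generalized eigenvalues $\lambda_i$ and the monotonicity of $x\mapsto 1/x$, and you verify $I+B\succ 0$ and flag the possible singularity of $A=KMK$ --- points the paper's proof asserts or glosses over --- so your argument is, if anything, more complete.
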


\begin{proof}
The Lagrangian function of (19) is
\begin{align}
\label{eqn_example}
L(W,Z) = tr(W^\top (I+B)W)-tr((W^\top AW-I)Z).%-2I+A
\end{align}
where $Z$ is a matrix with the Lagrange multipliers in the diagonal. Setting $\frac{\partial L(W,Z) }{\partial W}=0$, we have
\begin{align}
\label{eqn_example}
(I+B)W = AWZ.
\end{align}
To simplify the Lagrangian function $L(W,Z)$,
we multiply both sides of equation (22) on the left by $W^\top$ and combine it with (21),
so that we have
\begin{align}
\label{eqn_example}
\min_{W}tr((W^\top AW)^{-1}W^\top (I+B)W).
\end{align}
As the matrix $I + B$ is non-singular, an equivalent trace maximization problem (20) can be obtained.
\end{proof}

The solution of $W$ in (20) is the eigenvectors corresponding to the $k$ leading eigenvalues of $(I+B)^{-1}A$.
The whole procedure of TLR is summarized in Algorithm 1.
\begin{algorithm}
\caption{Transfer Latent Representation (TLR)}\label{DASLFN}
\begin{algorithmic}[1]
\renewcommand{\algorithmicrequire}{\textbf{Input:}}
\renewcommand{\algorithmicensure}{\textbf{Output:}}
\Require Labeled source domain data $X_{S}$, unlabeled target domain data $X_{T}$, source labels $Y_{S}$, latent space dimension $k$, trade-off parameters $\alpha$ and $\beta$;
\Ensure Predicted target labels $\hat{Y}_{T}$;
\State Compute matrices $K$, $H_{S}$, and $H_{T}$ according to (3);
\State Compute matrices $L$ and $M$ using (4) and (18) respectively;
\State Compute matrices $A$ and $B$ according to (17);
\State Obtain projection matrix $W$ according to the $k$ leading eigenvalues of $(I+B)^{-1}A$;
\State $P_{S} = H_{S}W$;
\State $P_{T} = H_{T}W$;
\State $\hat{Y}_{T}$ $\leftarrow$ \emph{Classifier}($P_{S}$, $P_{T}$, $Y_{S}$ );
\end{algorithmic}
\end{algorithm}

\hyphenation{invited}\hyphenation{Sequences}\hyphenation{metric}\hyphenation{linear}\hyphenation{Finally}\hyphenation{manifolds}\hyphenation{sub-spaces}
\hyphenation{res-pective}\hyphenation{constructs}\hyphenation{parameters}\hyphenation{values}\hyphenation{only}\hyphenation{classifi-cation}\hyphenation{datasets}
\hyphenation{validation}\hyphenation{vec-tor}\hyphenation{SURF}\hyphenation{randomly}\hyphenation{into}\hyphenation{features}\hyphenation{experiments}
\section{Experiments}
To demonstrate the efficacy of our proposed TLR approach, we perform experiments on two cross-domain datasets: 1) the 4DA dataset, and 2) the IXMAS dataset.
\subsection{Data Preparation}
\textbf{4DA Dataset:}
We adopt the public 4DA dataset \cite{Saenko2010Adapting}, which contains four domains, namely \textbf{Amazon}, \textbf{Webcam}, \textbf{DSLR}, and \textbf{Caltech-256}.
%This dataset \cite{Gong2012Geodesic} \hyphenation{con-tains}contains four domains, namely \textbf{Amazon}, \textbf{Webcam}, \textbf{DSLR}, and \textbf{Caltech-256}.
%Images from Amazon are downloaded from an online shopping site\footnote{www.amazon.cn}.
%Webcam consists of low-resolution ($640\times 480$) images recorded with a simple webcam.
%DSLR contains higher-resolution ($4288\times 2848$) images captured by a digital camera.
%Amazon, Webcam and DSLR are the subsets of \emph{Office} \cite{Saenko2010Adapting}.
%Caltech-256 \cite{griffin2007caltech} is a standard database for object recognition.
Fig.2 shows the MONITOR examples from the four domains.
The differences between them are obvious. For example, the monitor screens from Amazon and Caltech-256 display colorful images while the monitor screens from Webcam and DSLR are black.
%Although all four domains contain the same object, the differences between them are still obvious.
%For example, the monitor screens from Amazon and Caltech-256 display colorful images,
%but the monitor screens from Webcam and  DSLR are black, which will cause a domain shift when training a classifier.
%Some other factors, such as \hyphenation{vari-ations}variations in shooting angle, background, and pose,
%may also cause \hyphenation{differ-ences} differences between domains.
%Further details of 4DA are presented in Table 1.
For the experiments, we follow the procedure in \cite{Saenko2010Adapting} to extract SURF features.
Each image corresponds to an 800-dimensional vector.
The instances are then standardized by z-score.
We randomly select two \hyphenation{diff-er-ent}different domains from A (Amazon), W (Webcam), D (DSLR), and C (Caltech-256).
Thus there are a total of $4\times 3 = 12$ cross-domain pairs, e.g., $A\rightarrow W$, $A\rightarrow D$,$A\rightarrow C$,$\dots$,$C\rightarrow D$.\newline
\textbf{IXMAS Dataset:}
The Inria Xmas Motion Acquisition Sequences (IXMAS)\footnote{http://4drepository.inrialpes.fr/public/viewgroup/6} is a multi-view action recognition dataset containing 11 actions.
%, including \lq get up\rq, \lq punch\rq, \lq wave\rq, \lq turn around\rq, \lq sit down\rq, etc.
Each action is regarded as a category.
As can be seen in Fig. 2, five cameras (cam0, cam1, ..., cam4) are used to capture the actions from different perspectives.
Each perspective represents a domain.
Thus, five domains are included in this dataset.
Twelve actors are invited to perform each action three times, giving $12\times 3 = 36$ instances per class.
The feature extraction is based on the settings in \cite{Liu2011Cross}.
We conduct experiments on $5$ cross-domain pairs ($c0\rightarrow c1$, $c1\rightarrow c2$, $\dots$, $c4\rightarrow c0$).
%\begin{figure}[!t]
%\centering
%\includegraphics[width=3.3in]{511.png}
%\caption{Example actions from the IXMAS dataset. Each row represents one action
%from five different perspectives.}
%\label{IXMAS}
%\end{figure}
\begin{figure}[!t]
\centering
\includegraphics[width=3.4in]{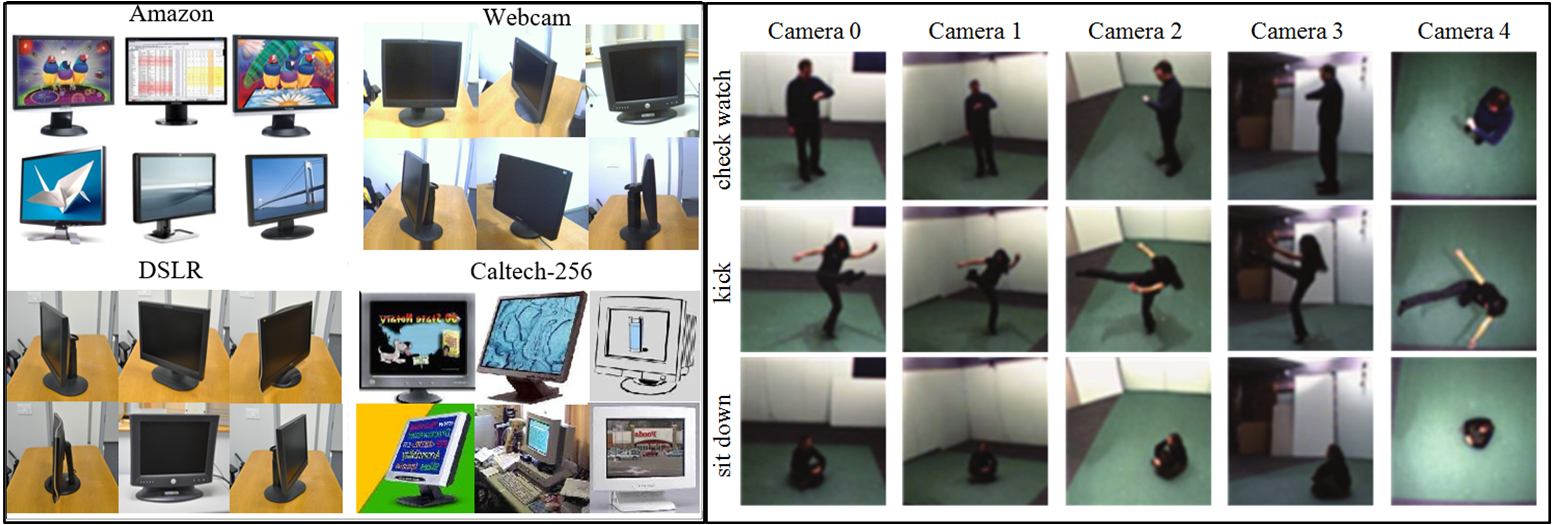}
\caption{Example images from the 4DA and IXMAS datasets.}
\label{3DA}
\end{figure}
\subsection{Comparison methods}
To evaluate the robustness of the proposed TLR approach, we compare TLR with six competitive methods:
Principal Component Analysis (\textbf{PCA}), Information-Theoretical Learning (\textbf{ITL} \cite{Yuan2012Information}), Subspace Alignment (\textbf{SA} \cite{Fernando2013Unsupervised}), Transfer Component Analysis (\textbf{TCA} \cite{Pan2011Domain}), Geodesic Flow Kernel (\textbf{GFK} \cite{Gong2012Geodesic}), Maximum Independence Domain Adaptation (\textbf{MIDA} \cite{Yan2016Domain_Y}).
%\begin{itemize}
%\item \textbf{1-NN}: 1-Nearest Neighbor Classifier
%\item \textbf{PCA}: Principal Component Analysis
%\item \textbf{ITL} \cite{Yuan2012Information}: Information-Theoretical Learning
%\item \textbf{SA} \cite{Fernando2013Unsupervised}: Subspace Alignment
%\item \textbf{TCA} \cite{Pan2011Domain}: Transfer Component Analysis
%\item \textbf{GFK} \cite{Gong2012Geodesic}: Geodesic Flow Kernel
%\item \textbf{MIDA} \cite{Yan2016Domain_Y}: Maximum Independence Domain Adaptation
%\end{itemize}
Following the settings in \cite{Gong2012Geodesic}, 1-NN is chosen as the base classifier, where the source and target domains are regarded as the training set and test set respectively.
This is so that we do not need to tune cross-validation parameters when training a model.
%We first compare our method with 1-NN, where the source and target domains are regarded as the training set and test set respectively.
We first compare our method with PCA, where both domains are mapped into their respective subspaces.
%The second method compared is PCA, where both domains are mapped into their respective subspaces.
In particular, ITL \cite{Yuan2012Information}, SA \cite{Fernando2013Unsupervised}, TCA \cite{Pan2011Domain}, GFK \cite{Gong2012Geodesic}, and MIDA \cite{Yan2016Domain_Y} obtain a \hyphenation{domain-invariant}domain-invariant feature subspace in different ways.
ITL optimizes an information-theoretic metric and learns the feature space discriminatively.
SA learns a linear projection that \hyphenation{aligns}aligns both domains using subspace alignment,
while TCA maps data from the source and target domains into an RKHS in order to transfer components across domains.
GFK \hyphenation{extracts} extracts an infinite number of subspaces and constructs geodesic flows between them.
Here, the subspaces of both domains are regarded as points in Grassmann manifolds.
Finally, MIDA maximizes the independence of the derived and the instance features in order to reduce the difference between domains.
\subsection{Implementation Details}
%As suggested in \cite{Gong2012Geodesic}, we use the source domain data to train an 1-NN classifier,
%then treat the target domain data as the test set.
%Since all remaining methods can help to derive latent spaces in different ways,
%we first obtain the latent representations of both domains and then train an 1-NN classifier on the source domain.

In TLR, we need to tune two model parameters: the trade-off parameters $\alpha$ and $\beta$.
Since the distributions of both domains are different, obtaining the optimal parameters by cross validation is impossible.
We thus evaluate TLR by designing a search space according to \cite{Long2013transfer}.
The search range for $\alpha$ and $\beta$ is $\left \{10^{-5},10^{-4},10^{-3},10^{-2},10^{-1},10^{0} \right \}$ separately.
Similarly, the optimal dimension of latent space $k$ is obtained by searching $\left \{10,20,...,200 \right \}$.
%Following the procedure outlined in \cite{Long2013transfer}, we obtain the optimal dimension of latent space $k$ by searching $\left \{10,20,...,200 \right \}$.
The best results are then reported.
For the other comparison methods mentioned above,
we tune the parameters according to the original paper and report their best performance.
\begin{table}[htbp]
\footnotesize
  \centering
  \caption{Classification accuracy (\%) for all methods on the 4DA and IXMAS datasets.}
    \begin{tabular}{|c|c|c|c|c|c|c|c|c|c|}
    \hline
    Methods    & PCA   & SA   & ITL    & TCA   & GFK   & MIDA  & TLR \\
    \hline
    \hline
    C$\rightarrow $A     & 32.5   & 31.4  & 35.4 & 37.6  & 35.8  & 37.3  & \textbf{38.7} \\
    \hline
    C$\rightarrow $D     & 26.3   & 33.0  & 33.1  & 35.0  & 35.7  & 35.3  & \textbf{38.1} \\
    \hline
    C$\rightarrow $W     & 25.1    & 26.9  & 28.0 & 31.9  & 31.0  & 31.8  & \textbf{34.4} \\
    \hline
    A$\rightarrow $C     & 31.9    & 32.2  & 34.9 & 34.9  & 33.8  & 34.6  & \textbf{35.2} \\
    \hline
    A$\rightarrow $D     & 25.7   & 28.4 & 30.0  & 30.1  & 33.2  & 29.7  & \textbf{34.8} \\
    \hline
    A$\rightarrow $W     & 28.7    & 28.8 & 29.6 & 32.6  & 33.0  & 32.6  & \textbf{35.1} \\
    \hline
    D$\rightarrow $C     & 27.7    & 30.8 & 31.6 & 31.1  & 27.8  & 30.7  & \textbf{32.2} \\
    \hline
    D$\rightarrow $A     & 31.0    & 31.8 & 34.1 & 34.2  & 31.5  & 33.9  & \textbf{35.1} \\
    \hline
    D$\rightarrow $W     & 59.5   & 78.7 & \textbf{78.8} & 75.3  & 69.1  & 75.5  & \textbf{78.8} \\
    \hline
    W$\rightarrow $C     & 25.5    & 25.1 & 27.8 & 29.7  & 28.9  & 29.7  & \textbf{30.6} \\
    \hline
    W$\rightarrow $A     & 31.2    & 30.0 & 32.4 & 30.3  & \textbf{33.7} & 31.4  & 30.0 \\
    \hline
    W$\rightarrow $D    & 68.1    & 81.1 & 82.4 & 80.1  & 78.2  & 81.5  & \textbf{86.3} \\
    \hline
    Average   & 34.4    & 38.2 & 39.8 & 40.2  & 39.3  & 40.3  & \textbf{42.4} \\
    \hline
    \hline
    c0$\rightarrow $c1    & 8.6  & 14.7 & 15.4 & 30.0  & 14.2  & 25.1  & \textbf{35.2} \\
    \hline
    c1$\rightarrow $c2     & 9.4    & 13.6 & 20.8 & 18.8  & 15.5  & 19.8  & \textbf{21.1} \\
    \hline
    c2$\rightarrow $c3     & 10.5    & 9.0 & 13.6 & 12.3  & 8.9 & 12.4  & \textbf{19.9} \\
    \hline
    c3$\rightarrow $c4    & 8.2    & 14.3 & 17.2 & 21.5  & 17.8  & 20.9  & \textbf{23.9} \\
    \hline
    c4$\rightarrow $c0    & 13.5    & 14.2 & 16.5 & 24.6  & 16.3  & 24.3  & \textbf{27.5} \\
    \hline
    Average  & 10.0    & 13.2 & 16.7 & 21.4  & 14.5  & 20.5  & \textbf{25.5} \\
    \hline
    \end{tabular}%
  \label{3DAtable}\vspace{-0.2cm}
\end{table}%

We follow the settings in \cite{Gong2012Geodesic} to select the training set and test set when conducting experiments on 4DA dataset.
For the IXMAS dataset, we randomly select 30 labeled source domain instances per category as the training set and treat all target domain samples as the test set.
We run experiments ten times at random for the 17 cross-domain image (object and action) pairs and the average classification accuracy is then reported in Table 1.
%The average classification accuracy and standard error are then reported.
\subsection{Experimental Results}
\hyphenation{dimen-sion}
% Table generated by Excel2LaTeX from sheet '1'
%Table 1 reports the classification accuracies of all methods on the 17 cross-domain image (object and action) datasets.
The best result for each cross-domain pair is shown in bold.
We observe that TLR outperforms all classic unsupervised domain adaptation \hyphenation{me-thods}methods.
%achieves much better performance than the comparison methods with statistical significance.
TLR's average classification accuracies on the 4DA and \hyphenation{IXMAS}IXMAS datasets are 42.4\% and 25.5\% respectively,
and the performance is improved by 2.1\% and 4.1\% relative to the best comparison method.
% i.e., a significant error reduction of 5.2\%.
%For IXMAS dataset, The average classification accuracy of TLR is 25.5\% and the performance improvement is 4.1\% compared to the best comparison method TCA.
%Note that, the adaptation difficulty in the 36 datasets varies a lot, since the standard 1-NN classifier can only achieve an average classification accuracy of 37.46\%, and may perform very poorly on many of the datasets.
%ÎÒÃÇµÄ·½·¨±ÈËùÓÐµÄ·½·¨¶¼ÒªºÃ
%Our method is the best compared with other methods.
This demonstrates that TLR can obtain more robust latent representations than its competitors when facing cross-domain recognition tasks.

Secondly, out of all methods studied, the classification results of PCA are the worst.
This is because PCA is not designed to solve domain adaptation problem.
%PCA's performance is slightly better than that of 1-NN, since PCA may extract principal components containing many common parts of both domains.
SA's performance is slightly better than that of PCA, since SA adapts both domains in PCA subspaces, which further improves the classification accuracy.

Thirdly, TLR significantly outperforms ITL. The classification accuracy of ITL on IXMAS is not very good.
A major limitation of ITL is that it assumes that data from the source domain and target domains are tightly clustered.
This assumption may be invalid on many datasets.
%We assume discriminative clustering, namely, data in both the source and the target domains are tightly clustered and clusters corresponds class boundaries.
%ITL based on the assumptation, this assume may be wrong.

%Lastly, our method achieves much better performance than the remaining three methods (TCA, GFK, and MIDA).
Note that TCA, somewhat like TLR, also learns latent representations using MMD.
However, the proposed method maintains more common properties between domains,
as the input features can be reconstructed by means of the simple linear autoencoder.
%\begin{figure}
%\begin{minipage}{0.3\linewidth}
%  \centerline{\includegraphics[width=2.2cm]{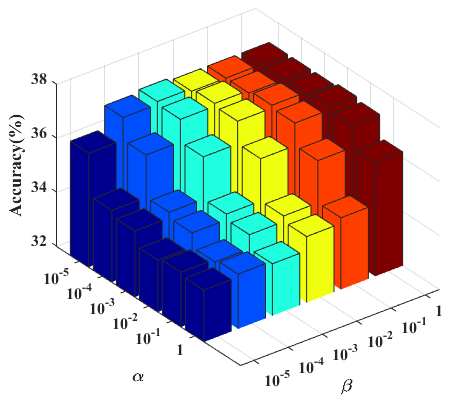}}
%  \centerline{(a) $k=60$}
%\end{minipage}
%\hfill
%\begin{minipage}{.3\linewidth}
%  \centerline{\includegraphics[width=2.2cm]{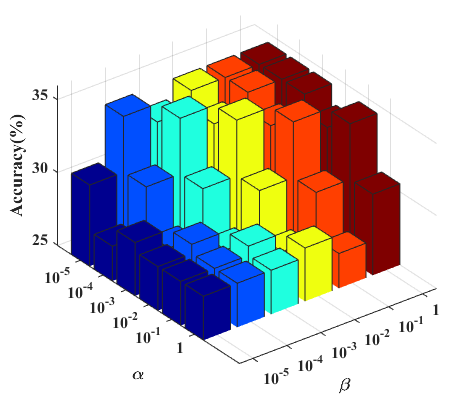}}
%  \centerline{(b) $k=100$}
%\end{minipage}
%\hfill
%\begin{minipage}{.3\linewidth}
%  \centerline{\includegraphics[width=2.2cm]{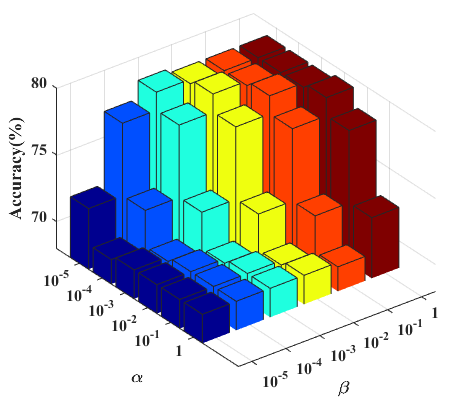}}
%  \centerline{(c) $k=140$}
%\end{minipage}
%\vfill
%\begin{minipage}{0.3\linewidth}
%  \centerline{\includegraphics[width=2.2cm]{cam0-2(2)_1.png}}
%  \centerline{(d) $k=60$}
%\end{minipage}
%\hfill
%\begin{minipage}{0.3\linewidth}
%  \centerline{\includegraphics[width=2.2cm]{cam1-3(2)_1.png}}
%  \centerline{(e) $k=100$}
%\end{minipage}
%\hfill
%\begin{minipage}{0.3\linewidth}
%  \centerline{\includegraphics[width=2.2cm]{cam4-1(22)_1.png}}
%  \centerline{(f) $k=140$}
%\end{minipage}
%\caption{Parameter sensitivity analysis for TLR.}
%\label{fig:res}
%\end{figure}

%While the performance of GFK on the 4DA dataset is good, GFK performs poorly on the IXMAS dataset.
The performance of GFK on the 4DA dataset is good, but poor on the IXMAS dataset.
In GFK, the latent space dimension should be small enough to guarantee that subspaces transit smoothly along the geodesic flow.
However, this may result in the loss of some important properties.
TLR, on the other hand, can obtain a more accurate latent space.

%Lastly, TLR achieves much better performance than MIDA, which depends on the kernel density estimation (KDE) to match the marginal distributions. Theoretically, MIDA can adapt the marginal distributions better than TCA, which is validated by the empirical results. However, TSL also does not explicitly reduce the difference in the conditional distributions. One possible difficulty for TSL to adapt the conditional distributions is that TSL relies on the distribution density, which is hard to manipulate with partially incorrect pseudo labels. TLR succeeds in matching the conditional distributions through exploring only the sufficient statistics.

Lastly, MIDA achieves better performance than the other compared algorithms.
Theoretically, MIDA can learn features containing maximal independence with the domain features.
However, one possible drawback of MIDA is that it retains fewer common properties than TLR does.

%In the coming sections, we provide empirical analysis on parameter sensitivity, which verifies that TSL can achieve stable performance under a wide range of parameter values.
%In the comparative study, we set k = 100 and 1) ¦Ë = 0.1 for the digit/face datasets, 2) ¦Ë = 1.0 for the object datasets.
\begin{figure}[H]
\centering
\subfigure[C$\rightarrow $A]{\includegraphics[width=2.6cm]{C-A2_1.png}}
\subfigure[A$\rightarrow $D]{\includegraphics[width=2.6cm]{A-D2_1.png}}
\subfigure[D$\rightarrow $W]{\includegraphics[width=2.6cm]{D-W2_1.png}}
\subfigure[c0$\rightarrow $c2]{\includegraphics[width=2.6cm]{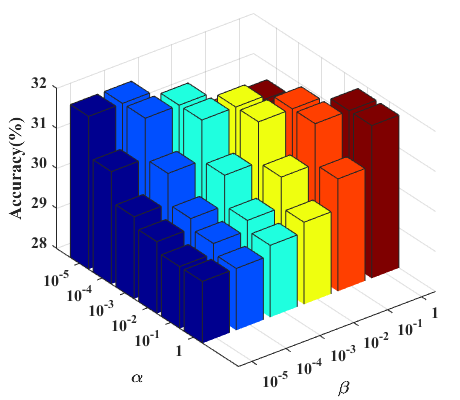}}
\subfigure[c1$\rightarrow $c3]{\includegraphics[width=2.6cm]{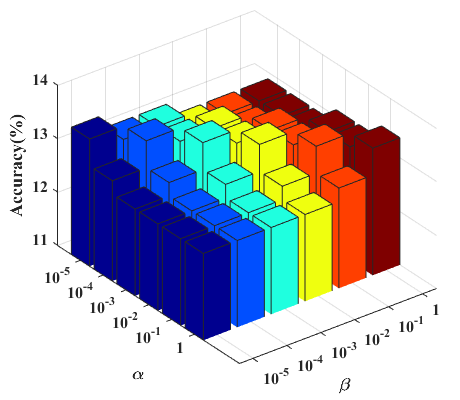}}
\subfigure[c4$\rightarrow $c1]{\includegraphics[width=2.6cm]{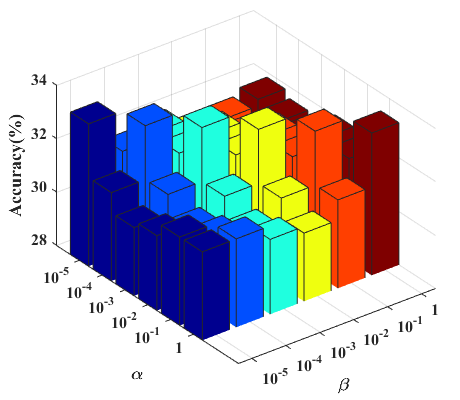}}
\caption{Parameter sensitivity analysis for TLR.}
\label{fig:res}\vspace{-0.2cm}
\end{figure}
\subsection{Parameter Sensitivity Analysis}
In the proposed TLR method, we need to tune two key parameters $\alpha$ and $\beta$.
$\alpha$ represents how much we weight the source domain data,
and $\beta$ denotes how much we weight the target domain data.
%We test $\alpha$ and $\beta$ with values among
%$\left \{10^{-5},10^{-4},10^{-3},10^{-2},10^{-1},1 \right \}$ separately.
To evaluate the \hyphenation{ef-fect}effect of $\alpha$ and $\beta$ on the experimental results,
we run the experiments on the 4DA dataset (three tasks: C$\rightarrow$A, A$\rightarrow$D, and D$\rightarrow$W)
and IXMAS dataset (another three tasks: c0$\rightarrow$c2, c1$\rightarrow$c3, and c4$\rightarrow$c1) with different parameter values.
The two parameters $\alpha$ and $\beta$ are tuned from $\left \{10^{-5},10^{-4},10^{-3},10^{-2},10^{-1},1 \right \}$ separately.
For the 4DA dataset, the results in Fig.3(a,b,c) show that the performance of our model using $\alpha$ with small values and $\beta$ with large values is often better than other settings.
Moreover, the classification accuracy decreases greatly when $\alpha$ becomes larger and $\beta$ becomes smaller.
%So we suggest setting $\alpha=10^{-5}$ (or $\alpha=10^{-4}$) and $\beta=1$ (or $\beta=10^{-1}$) in specific experiment.
For the IXMAS dataset,
the results in Fig.3(d,e,f) show that a better classification accuracy can be obtained when the values of $\alpha$ and $\beta$ are the same, and
the classification accuracy will decrease largely with the increasing of the difference between $\alpha$ and $\beta$.
%\begin{figure}[H]
%\centering
%\subfigure[C$\rightarrow $A]{\includegraphics[width=3cm]{C-A2_1.png}}
%\subfigure[A$\rightarrow $D]{\includegraphics[width=3cm]{A-D2_1.png}}
%\subfigure[D$\rightarrow $W]{\includegraphics[width=3cm]{D-W2_1.png}}
%\subfigure[cam0$\rightarrow $cam2]{\includegraphics[width=3cm]{cam0-2(2)_1.png}}
%\subfigure[cam1$\rightarrow $cam3]{\includegraphics[width=3cm]{cam1-3(2)_1.png}}
%\subfigure[cam4$\rightarrow $cam1]{\includegraphics[width=3cm]{cam4-1(22)_1.png}}
%\caption{Parameter sensitivity analysis for TLR.}
%\label{fig:res}\vspace{-0.2cm}
%\end{figure}
\section{Conclusion}
This paper proposes an unsupervised domain adaptation method called Transfer Latent Representation (TLR).
TLR aims to learn latent representations of the source and target domains.
In latent space, the common properties of both domains are preserved and noise that causes domain shift is reduced.
Experimental results on real-world cross-domain datasets demonstrate the effectiveness of our method.

In the future, we plan to extend TLR to solve the semi-supervised domain adaptation problem,
in which there are only a few data labels in the target domain.

%\section{ACKNOWLEDGMENTS}
%This work was supported by the National Natural Science Foundation of China (No. 61471274, 61771349), the MQNS (No. 9201701203), and the MQ Enterprise Partnership Scheme Poilt Res (No. 9201701455).

\section{Appendix}
\subsection{Proof of Proposition 2}
The objective function $F(W)$ can be rewritten as
\begin{align*}
\label{eqn_example}
F(W)
&=tr(W^\top KLKW)\nonumber\\
&+\alpha tr((WW^\top H_{S}^\top-H_{S}^\top)(H_{S}WW^\top-H_{S}))\nonumber\\
&+\beta tr((WW^\top H_{T}^\top -H_{T}^\top)(H_{T}WW^\top-H_{T}))\nonumber\\
%&=tr(W^\top KLKW)\nonumber\\
%&+\alpha tr((WW^\top H_{S}^\top H_{S}WW^\top-H_{S}^\top H_{S}WW^\top \nonumber\\
%&- WW^\top H_{S}^\top H_{S} + H_{S}^\top H_{S})\nonumber\\
%&+\beta tr(WW^\top H_{T}^\top H_{T}WW^\top -H_{T}^\top H_{T}WW^\top \nonumber\\
%&-WW^\top H_{T}^\top H_{T}+ H_{T}^\top H_{T})\nonumber\\
%\end{align*}
%
%In order to further simplify the objective function $F(W)$, we introduce one lemma.
%\begin{lemma}
%Suppose that $C$ is a $n\times m$ matrix and $D$ is a $m\times n$ matrix, such that
%\begin{eqnarray}
%\label{23}
%tr(CD)=tr(DC).
%\end{eqnarray}
%\end{lemma}Under Lemma 1, we have the following equations:
%\begin{align}
%tr((H_{S}^\top H_{S}W)W^\top) &= tr(W(W^\top H_{S}^\top H_{S})) \nonumber\\
%&= tr(W^\top H_{S}^\top H_{S}W).
%%&tr((H_{T}^\top H_{T}W)W^\top) = tr(W(W^\top H_{T}^\top H_{T})) = tr(W^\top H_{T}^\top H_{T}W)
%\end{align}
%\begin{align}
%tr((H_{T}^\top H_{T}W)W^\top) &= tr(W(W^\top H_{T}^\top H_{T})) \nonumber\\
%&= tr(W^\top H_{T}^\top H_{T}W).
%%&tr((H_{T}^\top H_{T}W)W^\top) = tr(W(W^\top H_{T}^\top H_{T})) = tr(W^\top H_{T}^\top H_{T}W)
%\end{align}
%By substituting (27) and (28) into the objective function $F(W)$, we have
%\begin{align*}
%F(W)
%&=tr(W^\top KLKW)\nonumber\\
%&+\alpha tr((WW^\top H_{S}^\top H_{S}WW^\top-2W^\top H_{S}^\top H_{S}W \nonumber\\
%& + H_{S}^\top H_{S})\nonumber\\
%&+\beta tr(WW^\top H_{T}^\top H_{T}WW^\top -2W^\top H_{T}^\top H_{T}W \nonumber\\
%&+ H_{T}^\top H_{T}) \nonumber\\
&=tr(W^\top KLKW)\nonumber\\
&+tr(WW^\top(\alpha H_{S}^\top H_{S}+\beta H_{T}^\top H_{T})WW^\top)\nonumber\\
&-2tr(W^\top(\alpha H_{S}^\top H_{S}+\beta H_{T}^\top H_{T})W)\nonumber\\
&
+tr(\alpha H_{S}^\top H_{S}
+\beta H_{T}^\top H_{T}).
\end{align*}
We let $A = \alpha H_{S}^\top H_{S} +\beta H_{T}^\top H_{T}$, so that $A$ can be described as
%Let $A = \alpha H_{S}^\top H_{S} +\beta H_{T}^\top H_{T}$, we have
\begin{align}
A = \begin{bmatrix}
H_{S}^\top
H_{T}^\top
\end{bmatrix}\begin{bmatrix}
\alpha I_{n_{1}\times n_{1}} & 0_{n_{1}\times n_{2}}\\
0_{n_{2}\times n_{1}} & \beta I_{n_{2}\times n_{2}}
\end{bmatrix}\begin{bmatrix}
H_{S}\\
H_{T}
\end{bmatrix}=KMK.
\end{align}
in which $M = \begin{bmatrix}
\alpha I_{n_{1}\times n_{1}} & 0_{n_{1}\times n_{2}}\\
0_{n_{2}\times n_{1}} & \beta I_{n_{2}\times n_{2}}
\end{bmatrix}$.
The objective function $F(W)$ then can be compactly rewritten as
\begin{align*}
F(W)
&=tr(W^\top KLKW)
+tr(WW^\top AWW^\top)\nonumber\\
&-2tr(W^\top AW)
+tr(A).
\end{align*}
Futhermore, we let
\begin{align}
B = KLK.
\end{align}
By substituting (25) into $F(W)$, the proposed
model can be summarized as follows:
\begin{align}
\label{eqn_example}
F(W)
&=tr(W^\top BW)
+tr(WW^\top AWW^\top)\nonumber\\
&-2tr(W^\top AW)
+tr(A)\nonumber\\
&=tr(WW^\top AWW^\top  + W^\top BW-2W^\top AW+A)
\end{align}
Then, Proposition 2 is proven.
% References should be produced using the bibtex program from suitable
% BiBTeX files (here: strings, refs, manuals). The IEEEbib.bst bibliography
% style file from IEEE produces unsorted bibliography list.
% -------------------------------------------------------------------------

\ninept
\bibliographystyle{IEEEbib}
\bibliography{myicme}
%\bibliography{camera-ready_icme2018template}

\end{document}